\newtheorem{definitionenv}{\bf Definition}
\newtheorem{lemmaenv}{Lemma}
\newtheorem{theoremenv}{Theorem}
\newtheorem{corollaryenv}{Corollary}
\newcommand{\gobble}[1]{}
\newenvironment{definition}[1][\unskip]{\vspace{0.05in}\begin{definitionenv}\em}{\end{definitionenv}\vspace{0.05in}}
\newenvironment{lemma}{\vspace{0.05in}\begin{lemmaenv}\em}{\end{lemmaenv}\vspace{0.05in}}
\newcommand*{\probleminternal}[4]{
	\begin{mdframed}\parbox{0.98\columnwidth}{
			\textbf{#4: #1} \\[0.05in]
			\renewcommand{\tabcolsep}{2pt}
			\begin{tabularx}{\linewidth}{rX}
				\emph{Input:} & #2 \\
				\emph{Output:} & #3
			\end{tabularx}
		}\end{mdframed}
		\par
	}
	\newcommand*{\casestudyinternal}[5]{
	\begin{mdframed}\parbox{0.98\columnwidth}{
			\renewcommand{\tabcolsep}{2pt}
			\begin{tabularx}{\linewidth}{lX}
				\textbf{Goal:} #1 \\
				\textbf{Soft constraints:} \\ 
				#2 \\
				\textbf{Solution:} \\
				#3 \\
				\textbf{Satisfied constraints:}  #4 \\
				\textbf{Computation time:}  #5 \\
			\end{tabularx}
		}\end{mdframed}
		\par
	}
\newcommand*{\problem}[3]{\probleminternal{#1}{#2}{#3}{Problem}}
\newcommand{\trace}{\operatorname{trace}}
\newcommand{\Words}{\operatorname{Words}}
\newcommand{\T}{\operatorname{\mathrm{T}}}
\newcommand{\F}{\operatorname{\mathrm{F}}}
\newcommand*{\Relbarfill@}{\arrowfill@\Relbar\Relbar\Relbar}
\newcommand*{\xeq}[2][]{\ext@arrow 0055\Relbarfill@{#1}{#2}}
 \newcommand{\added}[1]{{\color{added}{}#1}}
 \newcommand{\changed}[1]{{\color{changed}{}#1}}
\begin{document}
\title{What to Do When You Can't Do It All:\\{}Temporal Logic Planning with Soft Temporal Logic Constraints}

\author{Hazhar Rahmani \qquad Jason M. O'Kane\thanks{%
    The authors are with the Department of Computer Science and Engineering at the University of South Carolina.
    This material is based upon work supported by the NSF under Grant Nos.~1526862 and 1849291.}}

\maketitle
\begin{abstract}
  In this paper, we consider a temporal logic planning problem in which the objective is to find 
  an infinite trajectory that satisfies an optimal selection from a set of soft specifications expressed in linear temporal logic (LTL) while nevertheless satisfying a hard specification expressed in LTL.
  Our previous work considered a similar problem in which linear dynamic logic for finite traces (LDL$_f$), rather than LTL, was used to express the soft constraints.
  In that work, LDL$_f$ was used to impose constraints on finite prefixes of the infinite trajectory.
  By using LTL, one is able not only to impose constraints on the finite prefixes of the trajectory, but also to set `soft' goals across the entirety of the infinite trajectory.
  Our algorithm first constructs a product automaton, on which the planning problem is reduced to computing a lasso with minimum cost.
  Among all such lassos, it is desirable to compute a shortest one.
  Though we prove that computing such a shortest lasso is computationally hard, we also introduce an efficient greedy approach to synthesize short lassos nonetheless.
  We present two case studies describing an implementation of this approach, and report results of our experiment comparing our greedy algorithm with an optimal baseline.
\end{abstract}
  
\section{Introduction}
\label{sec:intro}
%
%
%
Temporal logics have become one of the most powerful and expressive tools for planning in robotics~\cite{fainekos2009temporal, schillinger2018simultaneous, he2015towards, ulusoy2013optimality}. 
%
%
Such logics, including linear temporal logic (LTL) specifically, offer high-level, user-friendly languages for specifying complex missions and tasks.
%
In fact, as simple and intuitive as temporal logic is for humans to understand, it is also precise and rigorous for robot algorithms to manipulate.
%
In particular, temporal logic has disrupted the classical conception of motion and path planning ---which deals with making a finite, point to point trajectory that avoids obstacles--- by allowing the imposition of other kinds of temporal or spatial constraints and by allowing the robot to make infinite, rather than only finite, trajectories.
%

%
In this paper, we consider a temporal logic planning problem in which a robot is tasked to accomplish a mission specified by an LTL formula while optimally satisfying a set of additional, possibly conflicting, LTL formulas.  These extra constraints could be user preferences, safety rules, soft goals, or other constraints.  

To illustrate the setting, see Figure~\ref{fig:intro}, in which a social enrichment robot, capable of making animal balloons and juggling, visits the residents of a retirement home.
The robot's basic mission is to visit the two common rooms, each infinitely often.  In addition to this basic mission, however, the robot is also charged with satisfying a collection of soft constraints, given in order of their relative importance. For example, we might prefer to maintain fairness by ensuring, if possible, that after making animal balloons in room 1, it should also do the same act in room 2.
Or perhaps the manager wants the robot to eventually perform juggling in room 2, if its current act in that room is making animal balloons.
The residents of room 2 might even prefer not to see the balloon animal act at all.  The essence of our problem is to determine how the robot can act, to satisfy its primary mission, along with some optimal subset of these kinds of soft constraints.

%
%

\begin{figure}[t]
  \centering
  \includegraphics[width=\linewidth]{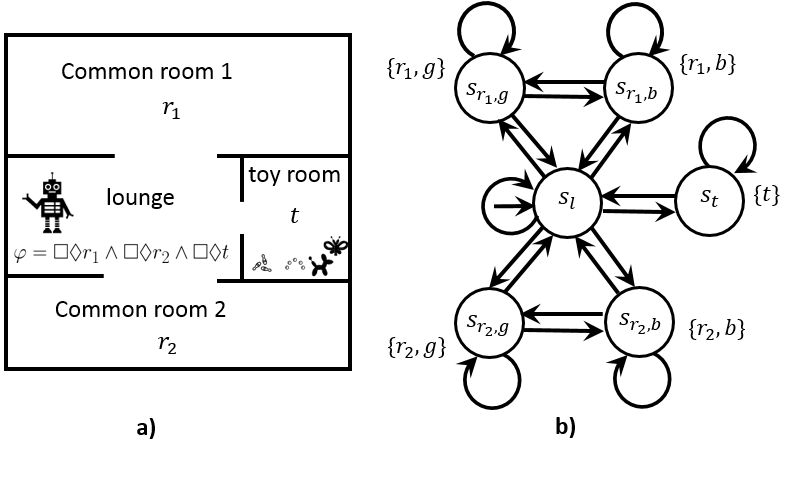}
  
  \caption{
    \textbf{a)} A retirement home in which a social enrichment robot visits each of the common rooms to perform juggling and to make animal balloons.
    Its primary mission is expressed by the LTL formula $\varphi = \square \lozenge r_1 \wedge \square \lozenge r_2 \wedge \square \lozenge t$.
    \textbf{b)} A transition system that models the robot's state within this environment.
  }
   \label{fig:intro} 
\end{figure}

Our prior work~\cite{rahmani2019optimal} considered a related problem in which the soft constraints were expressed in linear dynamic logic for finite traces (LDL$_f$)~\cite{de2013linear}.  Such formulas can express constraints only on finite prefixes of the trajectory, rather than on the entire trajectory as a whole.
%
\changed{A limitation of that work is that LDL$_f$ soft constraints cannot express soft goals that are satisfied only by infinite (rather than finite) trajectories. As an example, a task that requires the social enrichment robot to infinitely often perform the act of juggling is a simple soft goal that cannot be expressed by LDL$_f$.}
%
The difference in the language used to express the soft constraints not only improves the expressivity of the approach, but it leads to significant (and new, compared to the LDL$_f$ case) algorithmic challenges.


%
%
%

We contribute in this paper, a general formulation of the kind of problem in Figure~\ref{fig:intro}. 
To do so, we first review related work in Section~\ref{sec:rel}, and then present our problem statement in Section~\ref{sec:def}.
In Section~\ref{sec:alg}, we propose our algorithm, which first makes a state-weighted product automaton from the inputs, over which a lasso with minimum weight should be synthesized. 
We prove that computing a shortest such lasso, even with any constant approximation factor, is computationally hard.  Thus, we introduce an algorithm using a greedy approach to synthesize a short (but not necessarily shortest) lasso with minimum weight.
In Section~\ref{sec:case}, we present two case studies, and finally, in Section~\ref{sec:conc}, we present concluding remarks and discuss future work.

\section{Related Work}
\label{sec:rel} 

Our temporal logic planning is related to, but distinct from, several threads of prior work, which consider temporal logic planning in situations where no plan satisfying a given temporal logic formula can be synthesized.
Fainekos~\cite{fainekos2011revising} introduced an LTL revision problem, 
which, upon failures to plan a trajectory for an LTL formula, provides information about why that failure occurred, and how the LTL formula can be revised so that the transition system has a satisfiable trajectory for the revised formula.
%
%
%
\changed{
Kim \emph{et al.}~\cite{kim2012revision,kim2015minimal} consider the \emph{minimal revision problem} (MRP), which aims to find for a given specification (B{\"u}chi) automaton, a ``closest'' specification automaton for which the system has a trajectory.
They prove that MRP is NP-hard, and then provide a SAT-based encoding and a heuristic algorithm for solving MRP.}
%
%
\changed{
Lahijanian \emph{et al.}~\cite{lahijanian2015time} propose, based on a user-defined priority over atomic propositions, which they assumed to be low level tasks, an approach to measure how ``close'' is a trajectory to satisfy a given formula, and accordingly, propose an algorithm that generates a trajectory that has the minimum distance to the satisfaction of that given formula.
}%
Lahijanian and Kwiatkowska~\cite{lahijanian2016specification} later extended that idea for probabilistic environments modeled by MDPs.

Two recent results by Dimitrova \emph{et al.}~\cite{dimitrova2018maximum} and Tomita \emph{et al.}~\cite{tomita2017safraless} consider a problem, called \emph{maximum realizability problem}, which is a synthesis problem from a hard constraint and a set of soft constraints in the form of LTL formulas.
%
\changed{The aim of this problem is to synthesize a \emph{reactive transition system} (rather than a trajectory within a transition system).}
 %
 They consider the case where the soft constraints are of a specific kind of LTL formulas, those who assert that something \emph{globally} holds. 
 Their ideas are based on optimally refining or relaxing the soft specifications such that the resulting soft specifications along with the hard specification are realizable by a reactive transition system.
%

	%

The closest work to ours is by Tumova \emph{et al.}~\cite{tuumova2013minimum}, who address a similar problem but without the hard constraint; they consider the problem of making a trajectory that maximizes the sum of rewards from satisfying a set of conflicting LTL formulas. 
%
\changed{Their algorithm first makes a \emph{generalized B{\"u}chi automaton} for each LTL formula, and then from those automata, using the idea of converting a generalized B{\"u}chi automaton to a B{\"u}chi automaton~\cite{baier2008principles}, it makes a transition-weighted B{\"u}chi automaton, in which the Cartesian product of the state spaces of the automata are copied into different layers, a layer for each of the original automata, to keep track of the set of LTL formulas for which a run over the automaton is satisfying.}
From this transition-weighted B{\"u}chi automaton and the transition system, a product automaton is constructed, on which an accepting lasso is synthesized using a modified version of nested-DFS~\cite{courcoubetis1992memory}.
\changed{Our algorithm, which is simpler, constructs a state-weighted (rather than a transition-weighted) product automaton, and then uses a greedy approach to 
synthesize on this product automaton, a short accepting lasso with minimum weight; we prove that an accepting lasso with minimum length and weight is computationally hard to find.}
%
%

Our synthesis process over this product automaton is performed in two passes, first is the one-pass DFS of Tarjan's algorithm~\cite{tarjan1972depth} to compute the set of strongly connected components (SCCs) of the product automaton, and second is a pass that synthesizes the prefix of the lasso as a simple path from the initial state to a leader of a SCC with minimum weight and the suffix of the lasso as a cycle within that SCC using BFS iteratively.
The work of Tumova \emph{et al.}~\cite{tuumova2013minimum} does not consider synthesis of a shortest accepting lasso.
Two other results from the same authors~\cite{tumova2013least, castro2013incremental}
consider for the classical setting of path planning, generating a finite trajectory that minimizes the amount of time the robot deviates only the less important ones of a set of conflicting safety rules.
This problem is for finite trajectories and is treated differently.    

\section{Definitions and  problem statement}
\label{sec:def}
In this section, we review some preliminary tools and introduce the main problem we address.

\subsection{Preliminaries}
The set of infinite-length words over an alphabet $\Sigma$ is denoted $\Sigma ^ \omega$, and the infinite repetition of a finite word $r \in \Sigma^+$ is denoted $r^\omega$.
%
%
%
A \emph{lasso} is formed when such an infinite repetition is concatenated to a finite word, that is, a lasso is an infinite word of the form $r_1(r_2)^\omega$, in which $r_1 \in \Sigma^*$ and $r_2 \in \Sigma^+$.

The environment is modeled as a transition system.
\begin{definition}[transition system]
A \emph{transition system} $\mathcal{T}=(S, R, s_0, AP, L)$ consists of
    a finite set of states $S$; 
    a transition relation $R \subseteq S \times S$; 
    an initial state $s_0 \in S$ ;
    a set of atomic propositions $AP$; 
    and a labeling function $L:S \rightarrow 2^{AP}$, which assigns to each state, a set of atomic propositions, which are properties that hold at that state.
\end{definition}
An execution of the system goes through an \emph{infinite path} $\pi = s_0s_1s_2\cdots \in S^{\omega}$, in which $s_0$ is the initial state and for each $i \geq 0$, $(s_i, s_{i+1}) \in R$. 
The \emph{trace} of this path is $\trace(\pi)=L(s_0)L(s_1)L(s_2) \cdots \in (2^{AP})^\omega$.
The transition systems we deal with should be free of \emph{blocking states}---those states that do not have outgoing edges.
To specify a set of traces, one can use a variety of logical formulas, including those in LTL.
\begin{definition}[LTL syntax]
  \label{def:LTLSyntax}
  \emph{An LTL formula} is generated over a set of atomic propositions $AP$ by the following grammar
  $$\varphi ::= \top \mid p \mid \neg \varphi \mid \varphi \vee \varphi \mid \bigcirc\varphi \mid \varphi \:\mathcal{U} \varphi,$$
  in which $p \in AP$, $\top$ represents the constant $true$, and $\bigcirc$ ('next') and $\:\mathcal{U}$ ('until') are temporal operators.
\end{definition}
An LTL formula $\varphi$ specifies a set of infinite words over $2^{AP}$, denoted $\Words(\varphi)$, which consists of those words who \emph{satisfy} $\varphi$.
To see if a word (trace) $\sigma = A_0 A_1 A_2 \cdots \in (2^{AP})^\omega$ \emph{satisfies} an LTL formula $\varphi$, denoted $\sigma \vDash
  \varphi$, one can use rules that (1) $\varphi \vDash \top$ (2) $\sigma \vDash p$  iff $p \in A_0$ (3) $\sigma \vDash \neg \varphi$  iff $\sigma \nvDash \varphi$ (4) $\sigma \vDash \varphi_1 \vee \varphi_2$  iff $\sigma \vDash \varphi_1$ or $\sigma \vDash \varphi_2$ (5) $\sigma \vDash \bigcirc\varphi$  iff $\sigma[1..] \vDash \varphi$ (6) $\sigma \vDash \varphi_1\mathcal{U}\varphi_2$  iff $\exists j \geq 0, \sigma[j..] \vDash \varphi_2$ and $\forall 0 \leq i < j, \sigma[i..] \vDash \varphi_1$.
For simplicity, two other temporal operators $\lozenge$ ('eventually'), defined as $\lozenge \varphi := \top \:\mathcal{U} \varphi$, and $\square$ ('globally'), defined as $\square \varphi := \neg \lozenge \neg \varphi$, are also used.
%
%
We can also use the dual of $\top$, which is $\bot$, the dual of the Boolean operator $\vee$, which is $\wedge$, as well as the Boolean operator $\rightarrow$.
 %

%
%
Each LTL formula is equivalent to a certain type of automaton.
\begin{definition}
  A \emph{B{\"u}chi automaton} $\mathcal{A}=(Q, \Sigma, \delta, q_0,
  F)$ consists of a finite set of states $Q$; an alphabet $\Sigma$; a transition relation $\delta
  \subseteq Q \times \Sigma \times Q$; an initial state $q_0 \in Q$ ; and a set of accepting (final) states $F \subseteq Q$. 
\end{definition}
A run over the automaton is an infinite sequence $r=q_0q_1q_2\cdots \in Q^\omega$ in which $q_0$ is the initial state and for each $i \geq 0$, $(q_i, A, q_{i+1}) \in \delta$ for an $A \in \Sigma$. 
Sequence $r$ is a run for an infinite word $A_0 A_1 A_2 \cdots \in \Sigma^\omega$ if for each integer $i \geq 0$, $(q_i, A_i, q_{i+1}) \in \delta$. 
The set of states that appear infinitely many times
in an infinite run $r$ is denoted $\inf(r)$.
Accordingly, run $r$ is \emph{accepting} if $\inf(r) \cap F \neq \emptyset$.
Consequently, the language of $\mathcal{A}$, denoted $\mathcal{L}_\omega(\mathcal{A})$ is
$$\mathcal{L}_{\omega}(\mathcal{A}) = \lbrace w \in \Sigma^\omega \mid
\text{there exists an accepting run for } w \rbrace.$$
A connection between LTL formulas and B\"uchi automata is that for any LTL formula $\varphi$ over a set of atomic propositions $AP$, one can construct a B\"uchi automaton $\mathcal{A}_\varphi$ with alphabet $2^{AP}$ such that $\mathcal{L}_\omega(\mathcal{A}_\varphi) = \Words(\varphi)$. %
Several algorithms for this kind of construction are available~\cite{vardi1994reasoning, somenzi2000efficient, gastin2001fast,babiak2012ltl}.

In our algorithm, we need to be sure that each B\"uchi automaton created for a soft constraint is $\emph{nonblocking}$, that is, for any state $q$ in the automaton and every letter $a \in \Sigma$, there is at least one state $q^\prime$ such that $(q, a, q^\prime) \in \delta$. 
Any B\"uchi automaton is converted to a nonblocking B\"uchi automaton by adding a trapping state, to which all missing transitions are added.
We also consider a variant of B\"uchi automaton called \emph{generalized B\"uchi automaton}, which has the same syntax of B\"uchi automaton except that it has a set $\mathcal{F} \subseteq 2^Q$ rather than a set $F \subseteq Q$ as its acceptance set. 
More precisely, the acceptance set of the automaton is a set $\mathcal{F}$ consisting of sets $F_1, F_2, \ldots F_k$ with $F_i \subseteq Q$ for each $i \in \lbrace 1,. \cdots, k \rbrace$.
Accordingly, an infinite run $r$ over a generalized B\"uchi automaton $\mathcal{G}$ is accepting if for each $F \in \mathcal{F}$, it holds that $\inf(r) \cap F \neq \emptyset$.
The language of $\mathcal{G}$, $L_{\omega}(\mathcal{G})$, is the set of all infinite words for each of which there is an accepting run.

\subsection{LTL planning with soft constraints}
Our goal in this problem is to find, in a transition system modeling the environment, an infinite path whose trace satisfies a goal mission expressed as an LTL formula $\varphi$ while optimally satisfying a prioritized list of soft constraints $\psi_1, \psi_2, \cdots, \psi_n$, where each $\psi_i$ is an LTL formula, given in order of decreasing importance.
For this purpose, we define a cost function $f_{\omega}: (2^{AP})^\omega \rightarrow \mathbb{Z}_{\geq 0}$, such that for any $\sigma \in (2^{AP})^\omega$,
\begin{equation}\label{eq:cost-inf}
  f_\omega(\sigma) = \sum\limits_{i:\sigma \notin \Words(\psi_i)} n^{n-i}.
\end{equation}
Note that this cost function guarantees to impose the standard lexicographic ordering between all Boolean vectors, where each vector has an entry for each LTL constraint showing whether that LTL constraint is satisfied or not.
\added{As a result, a constraint with a higher priority (smaller number) is never sacrificed to satisfy a constraint with a lower priority.}
Accordingly, we want a trajectory whose trace minimizes this function.
With this in mind, our problem is defined as:
\problem{Optimal LTL Planning with Soft Constraints (OLPSC)}
  {A transition system $\mathcal{T}$, an LTL formula $\varphi$, and a prioritized list of $n$ LTL formulas $\psi_1, \psi_2, \cdots, \psi_n$.}
  {An infinite path $\pi$ over $\mathcal{T}$ such that $\trace(\pi) \vDash \varphi$ and $f_\omega(\trace(\pi))$ is minimized.}


\section{Algorithm Description}
\label{sec:alg}
This section presents an algorithm for solving the OLPSC problem. 
See Algorithm~\ref{alg:opp}.
The two main steps of this algorithm are constructing a product automaton (line~\ref{line:Prod}), and computing a lasso with minimum cost on the product automaton (line~\ref{line:OptLasso}).
In the sequel, we explain those steps.
%
%

%
%
%

\subsection{The product automaton}\label{sec:prodc}

The first step of the algorithm is, following an established pattern in the literature~\cite{tuumova2013minimum, lahijanian2015time, tumova2013least, castro2013incremental}, to construct a form of product automaton~\cite{vardi1986automata}.
%
To that end, the algorithm first 
makes the B\"uchi automata representations of the LTL formulas ---an automaton $\mathcal{A}$ for  $\varphi$, and an automaton $\mathcal{B}_i$ for each $\psi_i$.
It then 
ensures that $\mathcal{B}_i$'s are nonblocking and uses all those automata along with the transition system to construct
a product automaton based on the following definition. 
\begin{definition}
  \label{def:prodAuto}
  For a B{\"u}chi automaton $\mathcal{A}=(Q, 2^{AP}, \delta, q_0, F)$, a transition system $\mathcal{T}=(S, R, s_0, AP, L)$, and a prioritized list of
  $n$ nonblocking B\"uchi automata $\mathcal{B}_i=(Q_i, 2^{AP}, \delta_i, q_{0, i}, F_i)$ for $i \in \lbrace 1, \ldots, n \rbrace$,
  the \emph{product automaton}
  is a tuple $\mathcal{P}=(Q_\mathcal{P}, \delta_\mathcal{P}, q_{0,\mathcal{P}},
  F_\mathcal{P}, \mathbf{w})$ in which
  \begin{enumerate}
      \item $Q_\mathcal{P} = Q \times S \times Q_1 \times \ldots \times Q_n$ is a finite set of states;
      \item $q_{0, \mathcal{P}}=(q_0, s_0, q_{0, 1}, \ldots, q_{0, n})$ is the initial state;
      \item $\delta_{\mathcal{P}} \subseteq Q_{\mathcal{P}} \times
      Q_\mathcal{P}$ is a transition relation, such that $((q, s, q_1, \ldots, q_n),
      (q^\prime, s^\prime, q^\prime_1, \ldots, q^\prime_n)) \in \delta_P$ if and only if $(s,
      s^\prime) \in R$, $(q, L(s), q^{\prime}) \in \delta$, and $(q_i, L(s), q^{\prime}_i) \in \delta_i$ for each $i \in \lbrace 1, \ldots, n \rbrace$;
      \item $F_\mathcal{P} \!=\! F \times S \times Q_1 \times \ldots \times Q_n$ is the set of accepting
      states;
      \item $\mathbf{w}: Q_\mathcal{P} \rightarrow \lbrace \T, \F \rbrace^n$ is a state-weighting function that assigns to each state $(q, s, q_1, \ldots, q_n) \in
      Q_\mathcal{P}$, a Boolean vector $\mathbf{v}$ such that for any $1 \leq i \leq n$, it holds that $\mathbf{v}[i]=\T$ if and only if $q_i \in F_i$.
  \end{enumerate}
\end{definition}

This product automaton can be thought of as a B\"uchi automaton with a trivial alphabet, and thus, all definitions related to B\"uchi automata are applicable on it. 

For a state $(q, s, q_1, \ldots, q_n) \in
      Q_\mathcal{P}$, $\mathbf{w}((q, s, q_1, \ldots, q_n))$ indicates which of the $q_i$'s were accepting in their original B\"uchi automata.
Accordingly, for any $1 \leq i \leq n$, we use $\mathrm{F}_{i, \mathcal{P}}$ to denote in $\mathcal{P}$, the set of all states that are accepting for automaton $\mathcal{B}_i$, i.e., 
$\mathrm{F}_{i, \mathcal{P}} = \lbrace p \in Q_{\mathcal{P}} \mid \mathbf{w}(p)[i]=\mathrm{T} \rbrace$.
For a run $r_\mathcal{P}=q_0 q_1 q_2 \ldots \in Q_{\mathcal{P}}^\omega$, we use $\mathbf{inf}(r_\mathcal{P})$ to denote a vector $\mathbf{v} \in \lbrace \T, \F \rbrace ^n$ in which for each $1 \leq i \leq n$, $\mathbf{v}[i] = \T$ if and only if there are infinitely many $j \geq 0$ such that $\mathbf{w}(q_j)[i] = \T$.
Subsequently, by having a cost function $f_{\mathbf{w}}: \lbrace \T, \F \rbrace ^ n \rightarrow \mathbb{Z}_{\geq 0}$, in which for any $\mathbf{v} \in \lbrace \T, \F \rbrace ^ n$, 
\begin{equation}
    f_{\mathbf{w}} (\mathbf{v}) = \sum\limits_{i:\mathbf{v}[i] = \F} n^{n-i},
\end{equation}
the cost of $r_\mathcal{P}$ will be $f_{\mathbf{w}}(\mathbf{inf}(r_\mathcal{P}))$. 
The purpose of constructing $\mathcal{P}$ is to synthesize a run $r_{\mathcal{P}}$ that has the minimum cost.
To see why, we first consider the following lemmas.
\begin{lemma}
  \label{lem:Prod2TS}
  Given the structures in Definition~\ref{def:prodAuto}, let $r_\mathcal{P}=(q_0, s_0, q_{0, 1}, \ldots, q_{0, n})(q_1, s_1, q_{1, 1}, \ldots, q_{1, n})\cdots$ be a run over $\mathcal{P}$. It holds that:
  \begin{enumerate}
      \item If $r_\mathcal{P}$ is accepting for $\mathcal{P}$, then the sequence $\pi=s_0s_1s_2\cdots$ is a path
  for $\mathcal{T}$ such that $\trace(\pi) \in \mathcal{L}_\omega(\mathcal{A})$.
  
      \item For any $i \in \lbrace 1, \ldots, n \rbrace$, if $\mathbf{inf}(r_{\mathcal{P}})[i] = \T$, then the sequence $\pi=s_0s_1s_2\cdots$ is a path
  for $\mathcal{T}$ such that $\trace(\pi) \in \mathcal{L}_\omega(\mathcal{B}_i)$.

  \end{enumerate}
\end{lemma}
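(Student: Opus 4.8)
The plan is to prove both claims by projecting the run $r_\mathcal{P}$ onto its coordinates and exploiting the conjunctive form of $\delta_\mathcal{P}$ in Definition~\ref{def:prodAuto}. First I would write $r_\mathcal{P} = p_0 p_1 p_2 \cdots$ with $p_j = (q_j, s_j, q_{j,1}, \ldots, q_{j,n})$, and note that a single fact $(p_j, p_{j+1}) \in \delta_\mathcal{P}$ simultaneously yields $(s_j, s_{j+1}) \in R$, $(q_j, L(s_j), q_{j+1}) \in \delta$, and $(q_{j,i}, L(s_j), q_{j+1,i}) \in \delta_i$ for every $i \in \lbrace 1, \ldots, n \rbrace$. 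Combined with the fact that $p_0 = q_{0,\mathcal{P}}$ forces $s_0$, $q_0$, and each $q_{0,i}$ to be the respective initial states, this shows at once that $\pi = s_0 s_1 s_2 \cdots$ is a path of $\mathcal{T}$, that the $Q$-projection $q_0 q_1 q_2 \cdots$ is a run of $\mathcal{A}$ on the word $L(s_0) L(s_1) L(s_2) \cdots = \trace(\pi)$, and that for each $i$ the $Q_i$-projection $q_{0,i} q_{1,i} q_{2,i} \cdots$ is a run of $\mathcal{B}_i$ on that same word $\trace(\pi)$.

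With these projections in hand, each claim reduces to transferring the acceptance condition from $\mathcal{P}$ to the projected automaton. For claim (1), $r_\mathcal{P}$ being accepting for $\mathcal{P}$ means $\inf(r_\mathcal{P}) \cap F_\mathcal{P} \neq \emptyset$; since $F_\mathcal{P} = F \times S \times Q_1 \times \ldots \times Q_n$, some product state whose first component lies in $F$ recurs infinitely often, hence that first component $q \in F$ occurs infinitely often along the $Q$-projection. So the $Q$-projection is an accepting run of $\mathcal{A}$ on $\trace(\pi)$, which is precisely $\trace(\pi) \in \mathcal{L}_\omega(\mathcal{A})$. For claim (2), $\mathbf{inf}(r_\mathcal{P})[i] = \T$ means, by the definitions of $\mathbf{inf}$ and $\mathbf{w}$, that $q_{j,i} \in F_i$ for infinitely many $j$; an elementary pigeonhole argument on the finite set $F_i$ then produces a single state of $F_i$ appearing infinitely often in the $Q_i$-projection, so that projection is an accepting run of $\mathcal{B}_i$ on $\trace(\pi)$, i.e., $\trace(\pi) \in \mathcal{L}_\omega(\mathcal{B}_i)$.

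I do not anticipate a genuine obstacle here; the only points that need care are (a) making explicit that the letter read at step $j$ by both $\mathcal{A}$ and every $\mathcal{B}_i$ is exactly $L(s_j)$, so that the accepted word is literally $\trace(\pi)$ rather than merely some word, and (b) the pigeonhole step turning ``infinitely many indices land in $F_i$'' into ``$\inf(\cdot) \cap F_i \neq \emptyset$'' as demanded by the B\"uchi acceptance definition. The nonblocking hypothesis on the $\mathcal{B}_i$ plays no role in this lemma---it is needed only later, for existence of runs / the converse direction---so I would not invoke it.
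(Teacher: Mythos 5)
Your proof is correct and follows essentially the same route as the paper's: project $r_\mathcal{P}$ onto the $\mathcal{T}$-, $\mathcal{A}$-, and $\mathcal{B}_i$-coordinates using the conjunctive definition of $\delta_\mathcal{P}$, observe that each projection is a run on $\trace(\pi)$, and transfer the acceptance condition. The only difference is that you spell out details the paper leaves implicit (the letter read at step $j$ being $L(s_j)$, and the pigeonhole step for claim~(2)), which is a harmless refinement.
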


\begin{algorithm}[t]
  \caption{\textsc{OptimalLTLPlanningWSoftConsts}} 
  \label{alg:opp}
  \KwData{$\mathcal{T}$, $\varphi$, $\psi_1, \psi_2, \ldots, \psi_n$}
  \KwResult{A path $\pi=s_0s_1s_2\cdots$ on $\mathcal{T}$ s.t $\pi \vDash \varphi$ and $f_{\omega}(trace(\pi))$ is minimum}
  
  \DontPrintSemicolon
  \BlankLine
  
  \For {$i = 1$ to $n$}{\label{line:ForLoopPsis}
    {$\mathcal{B}_i \gets \textsc{LTL2B\"uchiAutomaton}(\psi_i)$} \label{line:B1} \;
    {$\mathcal{B}_i \gets \textsc{MakeNonblocking}(\mathcal{B}_i)$} \label{line:nonblocking}  \;
  }
  {$\mathcal{A} \gets \textsc{LTL2B\"uchiAutomaton}(\varphi)$}\label{line:Phi}\;
  
  {$\mathcal{P} \gets \mathcal{A} \times \mathcal{T} \times \mathcal{B}_1 \times \mathcal{B}_2 \cdots \mathcal{B}_n $}\label{line:Prod}\;

  {$r \gets \textsc{MinimumCostAcceptingLasso}(\mathcal{P})$}\label{line:OptLasso} \;
  
  {\textbf{if} $r = \textbf{nil}$ \textbf{then} \Return{\rm nil}}
  

{\Return{$\textsc{Convert2PathOnTS}(r)$}} \label{line:lasso2path}\;
  
\end{algorithm} 

\begin{proof} 
(1) From the construction of $\mathcal{P}$, it follows that the sequence $\pi=s_0s_1s_2\cdots$ ---the projection of $r_{\mathcal{P}}$ onto $\mathcal{T}$--- is a path over
$\mathcal{T}$, and that the sequence $r=q_0q_1q_2\cdots$ is a run for
$\trace(\pi)=L(s_0)L(s_1)L(s_2)\cdots$ over $\mathcal{A}$.
Given that $r_{\mathcal{P}}$ is an accepting run for $\mathcal{P}$, there are infinitely many $i$'s for $r=q_0q_1q_2\cdots$ such that $q_i \in F$, implying that $r$ is accepting, and thus, $\trace(\pi) \in \mathcal{L}_{\omega}(\mathcal{A})$.
(2) The proof is similar to the proof of (1) with the consideration that in this case, for each $i$, sequence $r_i=q_{0, i} q_{1, i} q_{2, i} \cdots$ is a run for
$\trace(\pi)=L(s_0)L(s_1)L(s_2)\cdots$ over $\mathcal{B}_i$. 
\end{proof}

\begin{lemma}
  \label{lem:TS2PROD}
  Assuming the structures in Definition~\ref{def:prodAuto}, for any $I \subseteq \lbrace 1, \ldots, n \rbrace$ and for any path $\pi=s_0s_1s_2\cdots$ in $\mathcal{T}$ such that $\trace(\pi) \in \mathcal{L}_\omega(\mathcal{A})$ and
  $\trace(\pi) \in \bigcap_{i \in I} \mathcal{L}_\omega(\mathcal{B}_i)$, there exists an accepting
  run $r_\mathcal{P}=(q_0, s_0, q_{0, 1}, \ldots, q_{0, n})(q_1, s_1, q_{1, 1}, \ldots, q_{1, n})\cdots$ over
  $\mathcal{P}$ such that $\mathbf{inf}(r_\mathcal{P})[i] = \T$ for all $i \in I$.
\end{lemma}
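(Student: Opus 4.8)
The plan is to build the desired run $r_{\mathcal{P}}$ component by component, essentially reversing the construction in Lemma~\ref{lem:Prod2TS}. The key facts I would use are that membership of $\trace(\pi)$ in the language of a B\"uchi automaton means there \emph{exists} an accepting run witnessing it, and that these witnessing runs may be chosen independently for each automaton as long as they all read the same input trace.

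First, since $\trace(\pi)\in\mathcal{L}_\omega(\mathcal{A})$, fix an accepting run $r=q_0q_1q_2\cdots$ of $\mathcal{A}$ on $\trace(\pi)=L(s_0)L(s_1)L(s_2)\cdots$, so $q_j\in F$ for infinitely many $j$. Next, for each $i\in I$, since $\trace(\pi)\in\mathcal{L}_\omega(\mathcal{B}_i)$, fix an accepting run $r_i=q_{0,i}q_{1,i}q_{2,i}\cdots$ of $\mathcal{B}_i$ on $\trace(\pi)$, so $q_{j,i}\in F_i$ for infinitely many $j$. For each $i\notin I$ I still need \emph{some} run of $\mathcal{B}_i$ on $\trace(\pi)$ starting from its initial state $q_{0,i}$: this is exactly where the hypothesis that each $\mathcal{B}_i$ is nonblocking is used, since from $q_{0,i}$ one can, at each step $j$, pick a $\delta_i$-successor under the letter $L(s_j)$, obtaining a run $r_i=q_{0,i}q_{1,i}q_{2,i}\cdots$ (its acceptance status is irrelevant). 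Now set $r_{\mathcal{P}}=(q_0,s_0,q_{0,1},\ldots,q_{0,n})(q_1,s_1,q_{1,1},\ldots,q_{1,n})\cdots$. I would then check directly from Definition~\ref{def:prodAuto} that $r_{\mathcal{P}}$ is a run of $\mathcal{P}$: its first state is $q_{0,\mathcal{P}}$, and for each $j$ the pair $((q_j,s_j,q_{j,1},\ldots,q_{j,n}),(q_{j+1},s_{j+1},q_{j+1,1},\ldots,q_{j+1,n}))$ lies in $\delta_{\mathcal{P}}$ because $(s_j,s_{j+1})\in R$ (as $\pi$ is a path in $\mathcal{T}$), $(q_j,L(s_j),q_{j+1})\in\delta$ (as $r$ is a run of $\mathcal{A}$ on $\trace(\pi)$), and $(q_{j,i},L(s_j),q_{j+1,i})\in\delta_i$ for every $i$ (as each $r_i$ is a run of $\mathcal{B}_i$ on $\trace(\pi)$).

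It then remains to verify the two acceptance conditions. Since $q_j\in F$ for infinitely many $j$, the corresponding product states lie in $F_{\mathcal{P}}=F\times S\times Q_1\times\cdots\times Q_n$, and as $Q_{\mathcal{P}}$ is finite, some state of $F_{\mathcal{P}}$ is visited infinitely often, so $r_{\mathcal{P}}$ is accepting for $\mathcal{P}$. For each $i\in I$, since $q_{j,i}\in F_i$ for infinitely many $j$, we have $\mathbf{w}(q_j,s_j,q_{j,1},\ldots,q_{j,n})[i]=\T$ for infinitely many $j$, which by the definition of $\mathbf{inf}$ gives $\mathbf{inf}(r_{\mathcal{P}})[i]=\T$, as required. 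I do not expect a genuine obstacle here; the only points worth stating explicitly are the appeal to the nonblocking assumption to supply runs of the $\mathcal{B}_i$ with $i\notin I$, and the observation that the componentwise runs need no coordination beyond sharing the common input trace $\trace(\pi)$.
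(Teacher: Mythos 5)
Your proposal is correct and follows essentially the same route as the paper's proof: fix an accepting run of $\mathcal{A}$, accepting runs of $\mathcal{B}_i$ for $i\in I$, use the nonblocking property to supply arbitrary runs of $\mathcal{B}_j$ for $j\notin I$, and zip them together with $\pi$ into a run of $\mathcal{P}$. You simply spell out the verification of the transition relation and the acceptance conditions, which the paper leaves implicit.
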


\begin{proof}
Let $r$ be an accepting run for $\pi$ over $\mathcal{A}$, and let for each $i \in I$, $r_i$ be an accepting run for $\pi$ over $\mathcal{B}_i$.
Given that all B\"uchi automata created for the soft constraints are nonblocking, for each $j \in \lbrace 1, 2, \cdots n \rbrace $ such that $j \notin I$, there exists an infinite run $r_j$ for $\pi$ over $\mathcal{B}_j$. Now we choose one such $r$, one such $r_i$ for each $i$, and one such $r_j$ for each $j$.  Then we combine $\pi$, the chosen $r$, all the chosen $r_i$'s, and all the chosen $r_j$'s to form an $r_\mathcal{P}$. This constructed $r_\mathcal{P}$ has the properties claimed in this lemma.
\end{proof}

The impact of these lemmas is that for any optimal solution of the OLPSC problem, there is an accepting run $r_\mathcal{P}$ over $\mathcal{P}$ for which $f_\mathbf{w}(\mathbf{inf}(r_{\mathcal{P}}))$ is minimum, and that from any accepting run $r_\mathcal{P}$ that minimizes $f_\mathbf{w}(\mathbf{inf}(r_{\mathcal{P}}))$, one can create an optimal solution to the OLPSC problem via projecting $r_\mathcal{P}$ into $\mathcal{T}$.
Accordingly, one can solve the OLPSC problem by computing over $\mathcal{P}$, a run $r_{\mathcal{P}}$ with minimum cost.
\subsection{Trajectory generation}
A run $r_\mathcal{P}$ with minimum cost is constructed in Line~\ref{line:OptLasso} of Algorithm~\ref{alg:opp}.
The product automaton may have many, or even infinitely many optimal runs; \added{in fact, there could exist an optimal run whose sequence of states cannot be specified by any pattern}; however, we are interested in only one kind, which is revealed by the following result.
\begin{lemma}
  \label{lem:lasso}
  If  $\mathcal{L}_{\omega}(\mathcal{P}) \neq \emptyset$, then $\mathcal{P}$ has an accepting lasso $r_{\mathcal{P}}=r_1(r_2)^\omega$ such that $r_1 \in Q_{\mathcal{P}}^*$, $r_2 \in Q_{\mathcal{P}}^+$, and that $f_{\mathbf{w}}(\mathbf{inf}(r_{\mathcal{P}}))$ is minimum.
\end{lemma}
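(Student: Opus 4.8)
The plan is to start from the characterization supplied by Lemmas~\ref{lem:Prod2TS} and~\ref{lem:TS2PROD}: since $\mathcal{L}_\omega(\mathcal{P}) \neq \emptyset$, there is at least one accepting run over $\mathcal{P}$, and among all accepting runs we may pick one, call it $r^\star$, whose cost $f_{\mathbf{w}}(\mathbf{inf}(r^\star))$ is minimum (the cost takes finitely many values, since it is a function of a Boolean vector, so a minimizer exists). Let $I^\star = \{\, i : \mathbf{inf}(r^\star)[i] = \T \,\}$. The goal is to exhibit an accepting \emph{lasso} with the same cost. The key observation is that a lasso's cost is determined entirely by which states occur in its cycle part $r_2$: for index $i$, $\mathbf{inf}(r_1 (r_2)^\omega)[i] = \T$ iff some state of $r_2$ lies in $\mathrm{F}_{i,\mathcal{P}}$, and similarly the lasso is accepting iff $r_2$ contains a state of $F_\mathcal{P}$. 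So it suffices to find a reachable cyclic (strongly connected) set of states $C$ in $\mathcal{P}$ such that $C \cap F_\mathcal{P} \neq \emptyset$ and, for every $i \in I^\star$, $C \cap \mathrm{F}_{i,\mathcal{P}} \neq \emptyset$; then any cycle through all the relevant distinguished states, prefixed by a path from $q_{0,\mathcal{P}}$, gives a lasso of cost $\le f_{\mathbf{w}}(\mathbf{inf}(r^\star))$, hence of minimum cost by optimality of $r^\star$.

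First I would extract such a set $C$ from $r^\star$ itself: let $C = \inf(r^\star)$, the set of states visited infinitely often. Standard facts about infinite runs on a finite-state automaton give that, after discarding a finite prefix, $r^\star$ stays within $C$, every state of $C$ is reachable from every other state of $C$ along $r^\star$ (so $C$ is contained in a single SCC of $\mathcal{P}$ and is internally strongly connected via edges used by $r^\star$), and $C$ is reachable from $q_{0,\mathcal{P}}$. Because $r^\star$ is accepting, $C \cap F_\mathcal{P} \neq \emptyset$; because $\mathbf{inf}(r^\star)[i] = \T$ for each $i \in I^\star$, $C$ meets $\mathrm{F}_{i,\mathcal{P}}$ for each such $i$. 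Then I would build the cycle $r_2$ by concatenating finite paths inside $C$: pick an anchor state $c \in C$, enumerate the finitely many distinguished states $d_1, \dots, d_m \in C$ (one accepting state plus one $\mathrm{F}_{i,\mathcal{P}}$-witness for each $i \in I^\star$), and splice together paths $c \rightsquigarrow d_1 \rightsquigarrow d_2 \rightsquigarrow \cdots \rightsquigarrow d_m \rightsquigarrow c$, each segment existing by strong connectivity of $C$. Take $r_1$ to be any finite path from $q_{0,\mathcal{P}}$ to $c$. Then $r_1 (r_2)^\omega$ is a well-formed run over $\mathcal{P}$, it is accepting (it visits the accepting $d_j$ infinitely often), and $\mathbf{inf}(r_1(r_2)^\omega)[i] = \T$ for all $i \in I^\star$, so its cost equals $f_{\mathbf{w}}(\mathbf{inf}(r^\star))$, the minimum.

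The routine parts are the manipulations of finite paths inside a strongly connected set and the verification that splicing yields a valid run; these I would state briefly. The one place that needs care — the main obstacle — is arguing cleanly that $\inf(r^\star)$ is strongly connected \emph{and} reachable from the start, i.e., that the tail of any infinite run on a finite graph ``settles'' into a strongly connected recurrent set; I would either cite this as a standard fact about $\omega$-runs on finite automata or give the short pigeonhole argument (some state recurs; between consecutive recurrences of it the run stays within $\inf(r^\star)$ for all sufficiently late times, and any two states recurring infinitely often are mutually reachable by following $r^\star$ between their occurrences). A secondary subtlety worth a sentence is that the cost function depends only on $\mathbf{inf}$, which in turn depends only on the set of states in $r_2$, not on multiplicities or order — so freely concatenating detours through $C$ cannot increase the cost, and by optimality of $r^\star$ it cannot be lower, pinning it exactly at the minimum.
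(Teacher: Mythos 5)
Your proposal is correct and follows essentially the same route as the paper's proof: both start from a cost-minimizing accepting run, observe that its cost is determined solely by the set $\inf$ of states visited infinitely often, and fold the run's tail into a cycle covering one state of $F_\mathcal{P}$ and one $\mathrm{F}_{i,\mathcal{P}}$-witness for each satisfied index $i$, prefixed by a finite path from $q_{0,\mathcal{P}}$. The only difference is how the cycle is built --- the paper takes a contiguous segment $p_l\ldots p_u$ of the run's tail, while you splice paths inside the strongly connected set $\inf(r^\star)$ --- and your version is, if anything, the more careful one, since it explicitly supplies the edge closing the cycle back to the anchor state, a step the paper's segment construction leaves implicit.
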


\begin{proof}
  We show that from any accepting run $r_{\mathcal{P}}^\prime = p_0 p_1 p_2 \ldots$ that minimizes $f_{\mathbf{w}}(\mathbf{inf}(r_{\mathcal{P}}^\prime))$, we can construct an accepting lasso $r_{\mathcal{P}}$ such that $f_{\mathbf{w}}(\mathbf{inf}(r_{\mathcal{P}}^\prime)) = f_{\mathbf{w}}(\mathbf{inf}(r_{\mathcal{P}}))$.
  Given that $r_{\mathcal{P}}^\prime$ has an infinite length while $Q_{\mathcal{P}}$ has only a finite number of states, there exists an integer $k \geq 0$ such 
  $\inf(r_{\mathcal{P}}^\prime) = \lbrace p_j \in r_{\mathcal{P}}^\prime \mid j \geq k \rbrace$.
  We choose $l$ to be the smallest such $k$.

  Let $I = \lbrace 1 \leq i \leq n \mid \mathbf{inf}(r_{\mathcal{P}}^\prime)[i]=\T \rbrace$.
  We choose an integer $j \geq l$ such that $r_{\mathcal{P}}^\prime[l..j]$ contains at least one state $p \in F_{\mathcal{P}}$ and it contains at least a state $q_i \in F_{i, \mathcal{P}}$ for each $i \in I$.
  %
 %
 We choose $u$ to be the smallest such integer $j$.
 We now set $r_2=r_{\mathcal{P}}^\prime[l..u]=p_l p_{l+1} \ldots p_{u}$ and set $r_1=r_{\mathcal{P}}^\prime[0..l-1]=p_0 p_{1} \ldots p_{l-1}$.
 Clearly, lasso $r_\mathcal{P}$ is accepting.  Moreover, $f_{\mathbf{w}}(\mathbf{inf}(r_{\mathcal{P}}^\prime)) = f_{\mathbf{w}}(\mathbf{inf}(r_{\mathcal{P}}))$.
\end{proof}

\begin{algorithm}[t]
  \caption{MinimumCostAcceptingLasso}
  \label{alg:optLasso}
  \KwData{Product automaton $\mathcal{P}=(Q_{\mathcal{P}}, \delta_{\mathcal{P}}, q_{0, \mathcal{P}}, F_{\mathcal{P}}, \mathbf{w})$}
  \KwResult{An accepting lasso for $\mathcal{P}$ minimizing $f_{\mathbf{w}}$ }
  {$SCCs = \textsc{StronglyConnectedComponents}(\mathcal{P})$}\; \label{line:CompSCCS}
  {$O \gets \textbf{nil}$}\; \label{line:defOptC} 
  {$minW \gets \infty $}\;
  \ForAll{$C \in SCCs$}{ \label{line:loopOptSCCStart}
    \If{$C.accepting = \textbf{True}$}{
         \If{$f_\mathbf{w}(C.\textbf{w}) < minW$}{
        {$O \gets C$}\;
        {$minW \gets f_\mathbf{w}(C.\textbf{w})$}\;
    }
    }
  }  \label{line:loopOptSCCEnd}
  
  {\textbf{if} $O = \textbf{nil}$ \textbf{then} \textbf{return nil}}
  
  {$r_1 = \textsc{BFSShortestPath}(q_{0, \mathcal{P}}, O.leader)$}\;\label{line:r1}
  {$r_2 = \textsc{MinCostAcceptingCycle}(O)$}\; \label{line:r2}
  {\textbf{return} $(r_1, r_2)$}\;

\end{algorithm}

 The punchline is that to synthesize an optimal run, it is sufficient to consider those runs who are lassos.
 We are also interested in finding a shortest such lasso ---a lasso $r_\mathcal{P} = r_1 (r_2)^w$ for which $|r_1|+|r_2|$ is minimum.
 Unfortunately, the following result reveals that finding a shortest such lasso is not easy.
 \begin{lemma}
  \label{lem:shortestLasso}
  Given a product automaton $\mathcal{P}$, the problem of finding over $\mathcal{P}$, a shortest lasso $r_\mathcal{P}=r_1(r_2)^\omega$ that minimizes $f_\omega(r_\mathcal{P})$ is NP-hard. 
\end{lemma}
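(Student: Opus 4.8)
The plan is to prove NP-hardness by a polynomial-time reduction from \textsc{Directed Hamiltonian Cycle}: given a directed graph $G=(V,E)$ with $|V|=n$, decide whether $G$ has a cycle visiting every vertex exactly once. First I would dispose of the easy cases in linear time: if $G$ is not strongly connected it cannot be Hamiltonian, so output a fixed no-instance; otherwise $G$ is strongly connected, every vertex has out-degree at least one, and $G$ has a closed walk based at a fixed vertex $v_1$ that visits all of $V$ (concatenate, for $j=2,\dots,n$, a path $v_1\rightsquigarrow v_j$ and then a path $v_j\rightsquigarrow v_1$).

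Given a strongly connected $G$ with $V=\{v_1,\dots,v_n\}$, I would build this OLPSC instance: the transition system $\mathcal T$ with state set $V$, transition relation $E$, initial state $v_1$, $AP=\{a_1,\dots,a_n\}$, and $L(v_i)=\{a_i\}$; the trivial hard specification $\varphi=\top$ (so $\mathcal A$ is one accepting state with a universal self-loop and $F_\mathcal P=Q_\mathcal P$); and, for $i=1,\dots,n$, the soft specification $\psi_i=\square\lozenge a_i$, in this priority order. Each $\psi_i$ is recognized by a deterministic, nonblocking, two-state B\"uchi automaton $\mathcal B_i$ that merely records whether the most recent letter was $\{a_i\}$, with its rejecting state chosen as the initial state. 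Form the product automaton $\mathcal P$ of Definition~\ref{def:prodAuto}. Although $|Q_\mathcal P|=n\cdot 2^n$ on paper, its part reachable from $q_{0,\mathcal P}$ has only $O(n^2)$ states, since after the first step the $\mathcal B_i$-components of a reachable state encode exactly ``which vertex was visited at the previous step''; every lasso based at $q_{0,\mathcal P}$ lives in this fragment, so this polynomial-size fragment is the product automaton we feed to the shortest-lasso problem. In it, $\mathbf w(p)[i]=\T$ precisely when the transition into $p$ read $\{a_i\}$, i.e.\ precisely when the previous transition-system state was $v_i$; hence for a lasso $r_\mathcal P=r_1(r_2)^\omega$ we have $\mathbf{inf}(r_\mathcal P)[i]=\T$ iff $v_i$ occurs in the periodic part $r_2$.

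Therefore the cost $f_{\mathbf w}(\mathbf{inf}(r_\mathcal P))=\sum_{i:\,v_i\notin r_2}n^{n-i}$ is $0$ iff $r_2$ visits every vertex of $G$, and is positive otherwise; by the preprocessing a cost-$0$ lasso exists, so the minimum achievable cost is $0$ and the minimum-cost lassos are exactly those whose periodic part is a closed walk covering $V$. Now for the length: since every reachable state has at most one accepting $\mathcal B_i$-component, a periodic part covering all $n$ vertices must use $n$ states with $n$ distinct transition-system projections, so $|r_2|\ge n$, with equality iff the $\mathcal T$-projection of $r_2$ is a Hamiltonian cycle of $G$; and $|r_1|\ge 1$ always, because $q_{0,\mathcal P}$ cannot recur (no later reachable state has all its $\mathcal B_i$-components rejecting). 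Hence, if $G$ has a Hamiltonian cycle, writing it starting at $v_1$ yields a cost-$0$ lasso with $|r_1|=1$ and $|r_2|=n$, of total length $n+1$, which is optimal; and if $G$ is not Hamiltonian, every cost-$0$ lasso has $|r_2|\ge n+1$ and $|r_1|\ge1$, hence length $\ge n+2$. So the shortest minimum-cost lasso of $\mathcal P$ has length $n+1$ iff $G$ is Hamiltonian, and an algorithm producing such a lasso decides \textsc{Directed Hamiltonian Cycle}; since the construction is polynomial, the lemma follows.

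I expect the main obstacle to be the bookkeeping rather than any deep idea. The three points needing care are: (i) arguing that the object fed to the problem is a legitimate, polynomial-size product automaton --- this hinges on the observation that the $\mathcal B_i$-components of a reachable state are determined by the previous vertex, so the reachable fragment has $O(n^2)$ states and still respects the weighting-function constraint of Definition~\ref{def:prodAuto}; (ii) confirming that $0$ is really the minimum cost, which is why strong connectivity must be pre-checked and a spanning closed walk exhibited; and (iii) pinning down the threshold $n+1$, i.e.\ verifying that $|r_1|\ge1$ is unavoidable and that achieving total length $n+1$ forces $r_2$ to trace a Hamiltonian cycle. (The stronger ``no constant-factor approximation'' claim alluded to in the introduction would need an additional gap-amplification step and is outside the scope of this plan.)
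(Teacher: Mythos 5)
Your proof is correct, but it takes a genuinely different route from the paper's. The paper reduces from the problem of finding a shortest accepting lasso in a \emph{generalized B\"uchi automaton}, which it cites as already known to be NP-hard (Clarke~\emph{et al.}, Ehlers): it reinterprets the acceptance family $\lbrace F_1,\dots,F_n\rbrace$ as the weight vectors of a state-weighted automaton in which every state is accepting, so that cost-$0$ lassos coincide exactly with accepting lassos of the generalized automaton and shortest ones correspond. You instead reduce directly from Directed Hamiltonian Cycle, encoding the graph as a transition system with trivial hard specification and soft specifications $\square\lozenge a_i$, and showing that the shortest minimum-cost lasso has length exactly $n+1$ iff the graph is Hamiltonian; your key points --- the $O(n^2)$ reachable fragment, the unavoidability of $|r_1|\ge 1$, and the $(n+1)$-versus-$(n+2)$ length threshold --- all check out. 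Your argument is self-contained (it does not lean on the GBA hardness result) and has the added virtue that the hard instance genuinely arises from an OLPSC instance through Definition~\ref{def:prodAuto}; note, though, that both you and the paper must read ``product automaton'' loosely as any state-weighted automaton of the right signature, since neither your reachable fragment nor the paper's construction is literally a Cartesian product of component state spaces. What the paper's choice of reduction buys, and yours does not, is the subsequent inapproximability remark: reducing from the GBA shortest-lasso problem lets the paper inherit Ehlers's constant-factor inapproximability for free, whereas your reduction only produces a gap of one and would need the additional amplification step you explicitly set aside.
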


\begin{proof}
We prove by reduction from the problem of finding a shortest accepting lasso for a generalized B\"uchi automaton, which is known to be NP-hard~\cite{clarke1995efficient, ehlers2010short}.
For each generalized B\"uchi automaton $\mathcal{G}=(Q, \Sigma, \delta, q_0, \mathcal{F}:=\lbrace F_1, F_2, \cdots, F_n \rbrace )$, we make a product automaton $\mathcal{P}=(Q_\mathcal{P}, \delta_\mathcal{P}, q_{0, \mathcal{P}}, F_\mathcal{P}, \mathbf{w})$ such that $Q_\mathcal{P} = Q$; $q_{0, \mathcal{P}} = q_0$; $F_\mathcal{P} = Q$; for each $q, q^\prime \in Q$, $(q, q^\prime) \in \delta_\mathcal{P}$ iff $(q, a, q^\prime) \in \delta$ for an $a \in \Sigma$; and for each state $q \in Q_{\mathcal{P}}$, function $\mathbf{w}$ assigns a vector $\mathbf{v} \in \lbrace \T, \F \rbrace^n$ such that for each $j \in \lbrace 1, 2, \cdots, n \rbrace$, $\mathbf{v}[j]=\T$ if $q \in F_j$, and otherwise, $\mathbf{v}[j]=\F$.
Consider that any run over $\mathcal{P}$ is accepting, and that $f_{\mathbf{w}}(\mathbf{inf}(r_\mathcal{P}))=0$ for any optimal lasso $r_\mathcal{P}$ in $\mathcal{P}$.
Any shortest lasso $r_\mathcal{P}$ over $\mathcal{P}$ for which $f_{\mathbf{w}}(\mathbf{inf}(r_\mathcal{P}))=0$ is a shortest accepting lasso for $\mathcal{G}$. This completes the proof.
\end{proof}

As a result of this lemma, unless P=NP we cannot compute in a time polynomial to the size of $\mathcal{P}$, a shortest lasso that is accepting for $\mathcal{P}$ and for which $f_{\mathbf{w}}(r_\mathcal{P})$ is minimum. 
Unfortunately, it is also NP-hard to approximate within any constant factor, the length of such a lasso (the proof would utilize the same reduction in Lemma~\ref{lem:shortestLasso} along with the fact due to Ehlers~\cite{ehlers2010short}, according to which it is NP-hard to approximate within any constant factor the length of a shortest accepting lasso for a generalized B\"uchi automaton).
\changed{Consequently, we utilize a greedy algorithm to find a shortest such lasso which has the minimum cost.}

Our algorithm uses graph algorithms to minimize $|r_1|$ and $|r_2|$ separately. 
Algorithm~\ref{alg:optLasso} shows the process.
Consider that $\mathcal{P}$ can be thought of as a directed graph with vertex set $Q_\mathcal{P}$ and edge set $\delta_{\mathcal{P}}$.
Additionally, all vertices (states) in $r_2$ are in a \emph{strongly connected component} (SCC) of the graph given that they are contained in a cycle, $r_2.r_2[0]$.
%
%
%
With these in mind, our algorithm first decomposes the graph into its strongly connected components, (Line~\ref{line:CompSCCS}); then finds a SCC that contains $|r_2|$ of a lasso $r_\mathcal{P} = r_1 (r_2)^\omega$ with minimum $f_\mathbf{w}(r_\mathcal{P})$ (Lines~\ref{line:defOptC}--\ref{line:loopOptSCCEnd}); and then construct $r_1$ and $r_2$ (Lines~\ref{line:r1} and Line~\ref{line:r2} respectively).
See Figure~\ref{fig:SCCs}.

To find the set of SCCs of the graph, we use the well-know algorithm of Tarjan~\cite{tarjan1972depth}. 
This algorithm uses depth first search (DFS) to traverse all the vertices (states) of the graph in one pass.
During this traversal, each vertex $p$ is assigned a unique integer $p.number$, which is, in fact, the traversal's step number at which $p$ is reached. 
Each vertex is assigned another integer $p.lowlink$, whose value is set to the smallest index of any node reachable from $p$, including $p$ itself.
During this algorithm, all vertices that are assigned the same value of $lowlink$ will be in the same SCC of the graph, and among those vertices, the one whose number is equal to its $lowlink$ is the leader (representative) of the SCC. 
\begin{figure}[t]
  \centering
  \includegraphics[width=\linewidth]{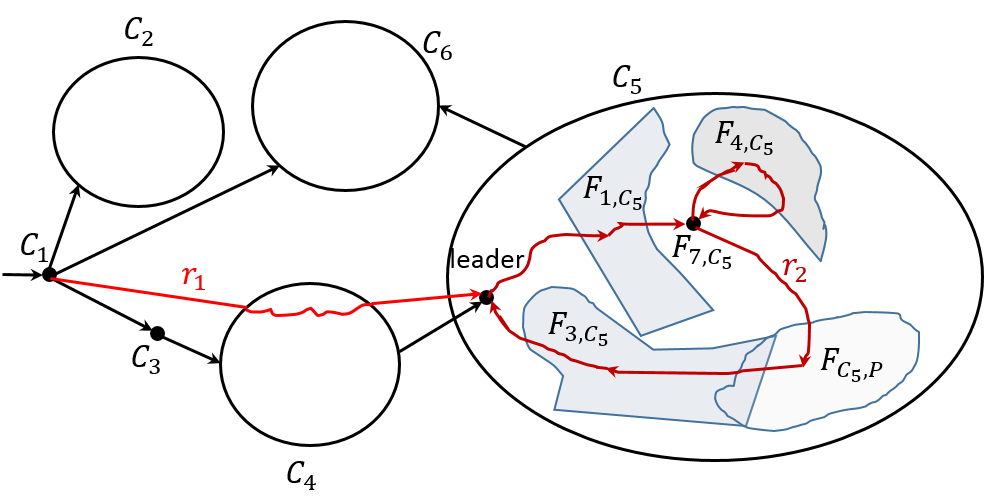}
  
  \caption{
    Showing our algorithm for finding an optimal lasso $r_{\mathcal{P}} = r_1 (r_2)^\omega$ over product automaton $\mathcal{P}$. 
    Each $C_i$ is a strongly connected component of the graph underling $\mathcal{P}$. 
    Component $C_5$ contains the suffix of an optimal lasso. Set $F_{C_5, \mathcal{P}}$ contains those state in $C_5$ that are accepting for $\mathcal{P}$, i.e., $F_{C_5, \mathcal{P}} = F_{\mathcal{P}} \cap C_5$.
    For each $i \in \lbrace 1, 3, 4, 7 \rbrace$, set $F_{i, C_5}$ are those states in $C_5$ that are accepting for B\"uchi automaton $\mathcal{B}_i$---the one who represents preference $\psi_i$.
  }
   \label{fig:SCCs} 
\end{figure}
\vspace*{-8pt}

As Tarjan's algorithm executes, we also compute for each SCC $C$, the value of $C.accepting$, which gets $True$ only when $C$ contains an accepting state of $\mathcal{P}$ and that $C$ is not a singleton $vertex$ who does not have a loop.
We also compute the value of Boolean vector $C.\mathbf{w}$, whose value is set to $C.\mathbf{w} = \sum_{q \in C} \mathbf{w}(q)$.
Notice that during the same pass of the Tarjan's algorithm, one can keep for each vertex, a link to its parent in the DFS search.
Accordingly, later, the algorithm can use those links to find, for each state, a path from the initial state to that state.
These paths can be used in Line~\ref{line:r1} to compute $r_1$, which is, in fact, a path from the initial state to the leader of the component which we choose to construct $r_2$ from.
However, a path $r_1$ that is constructed in that way may not have minimum length.
Accordingly, we use BFS to find a shortest simple path from $q_{0, \mathcal{P}}$ to $C.leader$.

The final phase of finding an accepting lasso is to synthesize the suffix $r_2$ of it, Line~\ref{line:r2} of Algorithm~\ref{alg:optLasso}.
This suffix is synthesized inside an optimal SCC $O$ using the following greedy algorithm.
Let $r_{\mathcal{P}}^\prime \in Q_\mathcal{P}^\omega$ be any run that minimizes $f_\omega(r_\mathcal{P}^\prime)$, and let $I= \lbrace 1 \leq i \leq n \mid \mathbf{w}(r_{\mathcal{P}}^\prime)[i] = \T \rbrace$.
Let for each $i \in I$, $F_{i, O}$ be the set of states in $O$ that are accepting for B\"uchi automaton $\mathcal{B}_i$, i.e., $F_{i, O} = O \cap \lbrace q \in Q \mid \mathbf{w}(q)[i] = \T  \rbrace$, and let $F_{O, \mathcal{P}}$ be the set of accepting states in $O$, i.e., $F_{O, \mathcal{P}} = F_{\mathcal{P}} \cap O$.
Our greedy algorithm synthesizes $r_2$ as the vertices (states) of a cycle starting from $O.leader$ such that the cycle contains at least a state of $F_{O, \mathcal{P}}$ and at least a state in $F_{i, \mathcal{P}}$ for each $i \in I$.
To do so, it uses variable $\mathrm{U}$ with initial value $F_{O, \mathcal{P}} \bigcup \lbrace F_{i, O} | i \in I \rbrace$.
It starts from $O.leader$, and performs \emph{breadth first search} (BFS) until it finds a state $s$ for which there is a set $M \in \mathrm{U}$ such that $s \in M$. Using the parent links set during BFS, the algorithm records the shortest path from $O.leader$ to $s$ as a first portion of $r_2$, and removes from $U$ all those sets $M$ for which $s \in M$. It then does a similar task, BFS traversal, starting from $s$, and then records the path traversed from $s$ to the new found state. It repeatedly does this process until $U$ becomes $\emptyset$. At this time, it does a BFS to find the shortest path back to $O.leader$. By this time, it has made $r_2$ as the vertices of a cycle.
Figure~\ref{fig:SCCs} illustrates how inside SCC $C_5$, the algorithm constructs $r_2$.

Given this discussion, we now analyze the time complexity of our algorithm.
\begin{lemma}
  For any automaton $\mathcal{P}=(Q_\mathcal{P},
  \delta_\mathcal{P}, Q_{0,\mathcal{P}}, F_\mathcal{P}, \mathbf{w})$, a lasso with minimum cost can be generated in time
  $\mathcal{O}(n(|\delta_{\mathcal{P}}|+|Q_{\mathcal{P}}|))$. 
\end{lemma}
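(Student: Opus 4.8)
The plan is to verify correctness only briefly — since it is essentially settled by Lemma~\ref{lem:lasso} and the construction preceding the statement — and then to charge the running time of Algorithm~\ref{alg:optLasso} line by line, the key observations being that each manipulation of an $n$-bit weight vector costs $\mathcal{O}(n)$ and that the cycle-synthesis phase performs only $\mathcal{O}(n)$ graph searches. For correctness: Lemma~\ref{lem:lasso} guarantees that some minimum-cost accepting lasso has its suffix confined to a single SCC; the loop in Lines~\ref{line:defOptC}--\ref{line:loopOptSCCEnd} selects an accepting SCC $O$ minimizing $f_\mathbf{w}(O.\mathbf{w})$; the greedy cycle built inside $O$ realizes exactly the cost $f_\mathbf{w}(O.\mathbf{w})$ because it visits a state of $F_{O,\mathcal{P}}$ and a state of each $F_{i,O}$ with $i\in I$; and Line~\ref{line:r1} prepends a prefix reaching $O.leader$, which exists since $O$ is in the reachable part of $\mathcal{P}$.

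For the time bound I would account for the four phases of Algorithm~\ref{alg:optLasso} separately. Line~\ref{line:CompSCCS} runs Tarjan's algorithm as a single DFS pass in $\mathcal{O}(|Q_\mathcal{P}|+|\delta_\mathcal{P}|)$ time; augmenting it to maintain, for each component $C$, the flag $C.accepting$ and the vector $C.\mathbf{w}$ (a componentwise disjunction of the $\mathbf{w}(q)$ over $q\in C$) adds $\mathcal{O}(n)$ work per state, hence $\mathcal{O}(n\,|Q_\mathcal{P}|)$ overall. The loop in Lines~\ref{line:defOptC}--\ref{line:loopOptSCCEnd} evaluates $f_\mathbf{w}(C.\mathbf{w})$, an $\mathcal{O}(n)$ summation, once per SCC, and there are at most $|Q_\mathcal{P}|$ SCCs, for a total of $\mathcal{O}(n\,|Q_\mathcal{P}|)$. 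Computing $r_1$ in Line~\ref{line:r1} is a single BFS from $q_{0,\mathcal{P}}$, i.e.\ $\mathcal{O}(|Q_\mathcal{P}|+|\delta_\mathcal{P}|)$.

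The remaining cost is that of \textsc{MinCostAcceptingCycle} in Line~\ref{line:r2}. Here the working collection $\mathrm{U}$ starts as $F_{O,\mathcal{P}}$ together with one set $F_{i,O}$ for each $i\in I$, so $|\mathrm{U}|\le n+1$. Each iteration runs one BFS restricted to $O$ — cost $\mathcal{O}(|Q_\mathcal{P}|+|\delta_\mathcal{P}|)$ — to reach a state $s$ lying in at least one member of $\mathrm{U}$, and then deletes every member of $\mathrm{U}$ containing $s$; thus each iteration strictly decreases $|\mathrm{U}|$, so at most $n+1$ iterations occur, followed by one final BFS back to $O.leader$. This phase therefore costs $\mathcal{O}(n(|Q_\mathcal{P}|+|\delta_\mathcal{P}|))$. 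Summing the four contributions yields the claimed $\mathcal{O}(n(|Q_\mathcal{P}|+|\delta_\mathcal{P}|))$ bound.

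I expect the only genuine subtlety — the main obstacle — to be the bound on the number of search rounds inside \textsc{MinCostAcceptingCycle}: one must argue both that $\mathrm{U}$ has size at most $n+1$ initially and that every round removes at least one set from it, so that the factor multiplying the per-search graph cost is $n$ rather than, say, $n^2$. Everything else is routine bookkeeping, the recurring point being that a single bit of each $n$-bit weight vector is touched only a constant number of times per state or edge.
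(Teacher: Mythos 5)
Your proposal is correct and follows essentially the same route as the paper's (much terser) proof: one Tarjan pass in $\mathcal{O}(|Q_\mathcal{P}|+|\delta_\mathcal{P}|)$ time plus $\mathcal{O}(n)$ BFS calls, bounded by noting that $\mathrm{U}$ starts with at most $n+1$ sets and shrinks by at least one set per search round. The extra bookkeeping you supply (the $\mathcal{O}(n)$ cost per state of maintaining $C.\mathbf{w}$ and evaluating $f_\mathbf{w}$, and the brief correctness check via Lemma~\ref{lem:lasso}) is consistent with, and fills in details omitted by, the paper's two-sentence argument.
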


\begin{proof}
It takes $\mathcal{O}(|\delta_{\mathcal{P}}|+|Q_{\mathcal{P}}|)$ time to find the set of strongly connected components.  BFS is called at most $n+1$ times, each of which takes $\mathcal{O}(|\delta_{\mathcal{P}}|+|Q_{\mathcal{P}}|)$ time.
\end{proof}

This bound simplifies to $\mathcal{O}(|\delta_{\mathcal{P}}|+|Q_{\mathcal{P}}|)$ if the number of soft constraints $n$ is treated as a constant.

We can slightly improve the quality of solution by letting instead of the leader, the first state to which BFS reaches from the leader and who is either a final state or it corresponds to a final state of the B\"uchi automata for a soft constraint, to be the midpoint of the lasso. 

Though Algorithm~\ref{alg:optLasso} generates a lasso of minimum cost, it is not guaranteed to produce the shortest such lasso.
In the next section, we compare our algorithm with an optimal brute-force algorithm, which finds the shortest lasso by letting any state within an optimal SCC to be the midpoint of the lasso, and synthesizes the suffix of the lasso by searching from the shortest ones, all cycles, simple or otherwise, that start from the midpoint until it finds a satisfactory cycle or the length will be longer than the length of a shortest lasso computed for other midpoints.
\section{Case studies and experiments}
\label{sec:case}
We have implemented Algorithms~\ref{alg:opp} and \ref{alg:optLasso} in Java.
The computed results were executed on an Ubuntu 16.04 computer with a 3.6GHz processor.


\subsection{Case study: Hospital deliveries}
Figure~\ref{fig:hospital} shows a hospital which has an emergency department ($e$), a primary care department ($p$), a maintenance department ($n$), a pharmacy ($h$), and a warehouse ($w$).
In this hospital, a robot delivers first aid items ($f$) and medicine ($m$) from the warehouse to the other departments. 
\added{Each state of the transition system for this case study has an atomic proposition indicating a unit within the hospital, along with other propositions indicating which items the robot is caring at that unit. Those states are connected according to the connectivity of the units within the hospital and the items the robot can take or leave at a unit.}
The robot's primary tasks are to \emph{deliver first aid items to the emergency department, deliver first aid items to primary care, deliver medicine to the pharmacy, and report for maintenance, each infinitely often}.  In addition, suppose the robot is given these soft specifications, ordered from most to least important:
\begin{enumerate}
        \item If first aid items are delivered to the primary care department, then \changed{do} not deliver additional first aid items there until first aid items \changed{have been delivered} to the emergency department, and vice versa.
    \item If the first aid items are picked from the warehouse, then \changed{they must not be delivered} to the primary care department until the emergency department receives the first aid items. 
    \item \changed{Do not carry first aid items and medicine together.}
    \item Always pick both first aid items and medicine when leaving warehouse. 
\end{enumerate}    

Notice that, in particular, the first two constraints cannot be expressed in LDL$_f$. Thus, the algorithm from our prior work~\cite{rahmani2019optimal} cannot generate a plan for this instance. The box below shows how to formulate these constraints into an instance of OLPSC, along with the solution computed by our implementation.

\begin{figure}[t]
  \centering
  \includegraphics[width=\linewidth]{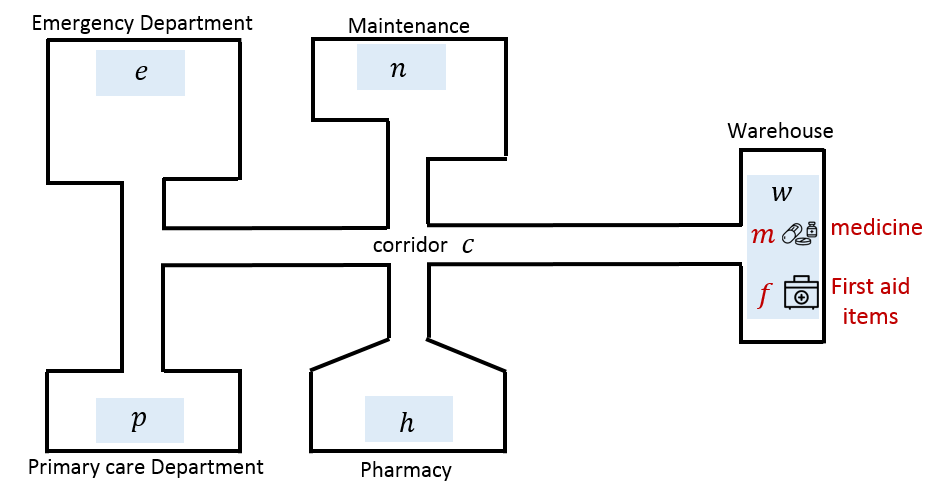}
  
  \caption{A hospital, in which a delivery robot is tasked to deliver first aid items to emergency and primary care departments, deliver medicine to pharmacy, and visit the maintenance section. The robot's task is expressed by LTL formula $\varphi = \square \lozenge (p \wedge f) \wedge \square \lozenge (e \wedge f) \wedge \square \lozenge (h \wedge m) \wedge \square \lozenge n$}
   \label{fig:hospital} 
\end{figure}

\casestudyinternal
{$\square \lozenge (p \wedge f) \wedge \square \lozenge (e \wedge f) \wedge \square \lozenge (h \wedge m) \wedge \square \lozenge n$}
{
\ \ \ 1) $\square ((p \wedge f) \rightarrow \bigcirc (\neg p \:\mathcal{U} (e \wedge f)))$ \\ \ \ \ \ \ \ $\wedge \square ((e \wedge f) \rightarrow \bigcirc (\neg e \:\mathcal{U} (p \wedge f)))$ \\
\ \ \ 2) $\square ((c \wedge f) \rightarrow (\neg p \:\mathcal{U} e))$   \\
\ \ \ 3) $\square (\neg w \rightarrow \neg (f \wedge m))$  \\
\ \ \ 4)    $\square ((w \wedge \bigcirc c ) \rightarrow \bigcirc (f \wedge m))$
}
{
\ \ \ $wcp(cecw_fc_fp_fcw_fc_fe_fcw_mc_mh_mh_mhcnncpc)^\omega$
}
{1, 3}
{201.50s}

In the sequence shown for the solution, a letter is the location of the robot, and the subscript of the letter, if any, is what the robot is carrying.  Not that this optimal solution satisfies only the first and third soft constraints.

For comparison purposes, we also executed on this example, the brute-force algorithm to compute a shortest accepting lasso that minimizes the cost function $f_{\omega}$. That algorithm failed to compute such a lasso in 15 hours.
 
\subsection{Case study: Retirement home}
In this section, we look back to the retirement home example from Section~\ref{sec:intro}.
The transition system in that example has atomic propositions for locations ---$r_1$ for common room 1, $r_2$ for common room 2, and $t$ for toy room--- and also for the robot acts ---$g$ for juggling, and $b$ for making animal balloons.
The robot is tasked to \emph{visit $r_1$, $r_2$, and $t$, each one infinitely often}.  The robot's specification also includes 6 soft constraints (some of which were mentioned in Section~\ref{sec:intro}):

\begin{enumerate}
        \item After making animal balloons in room 1, the robot should immediately do the same act in room 2.
    \item The robot should perform each of the acts in room 1 infinitely often.
    \item The robot should not perform the act of making animal balloons in room 2.
    \item If the current act in room 2 is making animal balloons, then the robot should eventually make animal balloons in that room.
    \item The robot should not stay in a common room once it performed an act.
    \item The robot should perform at least two acts in each common room once it has entered that room. Those two acts can be different or not.
\end{enumerate}

We can formalize this scenario as an instance of OLPSC:
\casestudyinternal
{$\square \lozenge r_1 \wedge \square \lozenge r_2 \wedge \square \lozenge t$}
{
\ \ \ 1) $\square ((r_1 \wedge b)\rightarrow \bigcirc ((\neg b \wedge \neg g) \:\mathcal{U} (r_2 \wedge b)))$ \\
\ \ \ 2) $\square \lozenge (r_1 \wedge g) \wedge \square \lozenge (r1 \wedge b)$ \\
\ \ \ 3) $\square (r_2 \rightarrow \neg b)$ \\
\ \ \ 4) $\square ((r_2 \wedge b) \rightarrow \lozenge (r_2 \wedge g))$ \\
\ \ \ 5) $\square (r_1 \rightarrow \bigcirc \neg r_1) \wedge  \square (r_2 \rightarrow \bigcirc \neg r_2 )$\\
\ \ \ 6) $\square ((\neg r_1 \wedge \bigcirc r_1)\rightarrow (\bigcirc \bigcirc r_1))$
}
{
\ \ \ $s_l s_{r_1,g} s_l s_{r_2,b} (s_ls_{r_2,b}s_ls_{r_2,g}s_ls_{r_2,b}s_ls_{r_1g} $\\ \ \ \ \ $s_ls_{r_1,b}s_ls_{r_2,b}s_ls_{r_1,b}s_ls_{r_2,b}s_ls_ts_ts_ls_{r_2,b}s_l)^\omega$ 
}
{1, 2, 4, 5}
{21.91s}

For comparison, the brute-force algorithm computed $r_{\mathcal{P}}^*=s_ls_{r_1,g}s_{r_1,b}s_ls_{r_2,b}s_{r_2,g}s_ls_ts_l(s_{r_1,g}s_{r_1,b}s_ls_{r_2,b}s_{r_2,g}s_ls_ts_l)^\omega$ as a shortest accepting lasso minimizing $f_\omega$ in 3,254 seconds.
The shortest accepting lasso has length 16, while the lasso generated by our algorithm has length 26. 
Now suppose there is a change in the relative ordering of the soft constraints, in which the last two are swapped.  This induces a change to the set of constraints that can be satisfied, but not to the basic structure of the product automaton.

%
%

\casestudyinternal
{$\square \lozenge r_1 \wedge \square \lozenge r_2 \wedge \square \lozenge t$}
{
\ \ \ 1) -- 4) Same as above.\\
\ \ \ 5) $\square ((\neg r_1 \wedge \bigcirc r_1)\rightarrow \bigcirc \bigcirc r_1)) $ \\
\ \ \ 6) $\square (r_1 \rightarrow \bigcirc \neg r_1 ) \wedge  \square (r_2 \rightarrow \bigcirc \neg r_2)$
}
{
\ \ \ $s_l s_{r_2,b} (s_{r_2,g} s_{r_2,b}s_{r_2,g}s_ls_{r_1,g}s_{r_1,b}s_ls_{r_2,b}s_{r_2,g}$\\ \ \ \ \ \ $s_ls_{r_1,g}s_{r_1,b}s_ls_{r_2,b}s_{r_2,g}s_ls_ts_ts_ls_{r_2,b}s_{r_2,g})^\omega$ 
}
{1, 2, 4, 5}
{0.02s (excluding product \\ automaton construction)}

In fact, if we have already computed the product automaton for the instance above, we need only to synthesize an accepting lasso, without any need to reconstruct the product automaton again.
To synthesize the new lasso, for each state
of the product automaton, we swap the elements of vector $\mathbf{w}$ according to the new order of constraints, and then synthesize the lasso.

It took only 20 milliseconds to synthesize an optimal run on the constructed automaton, while for the first one it took 21.91 seconds, much of which was spent to form the product automaton.
We also use the brute-force algorithm to compute the shortest accepting lasso $r_{\mathcal{P}}^*$ that minimizes the $f_\omega(r_{\mathcal{P}}^*)$.
It took 110 seconds, excluding the time of product automaton construction, for the brute-force algorithm to compute the shortest accepting lasso, which was $r_{\mathcal{P}}^*=s_ls_{r_1,g}s_{r_1,b}s_ls_{r_2,b}s_{r_2,g}s_ls_ts_l(s_{r_1,g}s_{r_1,b}s_ls_{r_2,b}s_{r_2,g}s_ls_t)^\omega$, with length 16.
Observe that the length of the lasso generated by our greedy algorithm was 23 while the length of the shortest accepting lasso was 16.
The product automaton for this problem had 1440 states. 

\subsection{Experiments}
In this section, we present results of our experiment comparing our (greedy) algorithm with the brute-force algorithm.
We performed all those experiments on the same machine on which we executed our case studies.
In this experiment, we execute both the greedy algorithm and the brute-force algorithm on a large number of graphs (product automata) of different sizes that we generated randomly by the Erd\H{o}s-R{\'e}nyi model of $G(n, p)$, according to which each edge will be included in the graph with probability $p$ independent from any other edge. 
    \begin{figure*}
        \centering
        \small
        \begin{tabular}{lSSSSSSSS} 
            \toprule
            \multicolumn{1}{l}{States} &\multicolumn{2}{c}{Greedy algorithm}    &\multicolumn{3}{c}{Brute-force algorithm}    &\multicolumn{3}{c}{Approximation ratio}    \\ 
            \cmidrule(lr){2-3}\cmidrule(lr){4-6}\cmidrule(lr){7-9}
                                      & {Avg Time (Sec.)} & {Avg lasso size} & {Times of Success} & {Avg Time (Sec.)} & {Avg lasso size} & {Min} & {Max} & {Avg} \\
            \midrule
            100            & .0001 &  17.86 & 100   & 12.404   & 9.27   & 1   & 4.33 & 2.08  \\
            200            & .0002 &  21.88 & 100   & 53.91  & 10.39   & 1   & 4.33 & 2.24  \\
            300               &  0.0001  &  23.68 &  96  &  170.151 & 10.93  &  1  &  4.4 & 2.26    \\ 
            500   &   0.0001   &   28.14   &  62  & 504.799   &   11.63    &  1  &  4.14    &   2.35   \\ 
            \bottomrule
        \end{tabular}
        \caption{Results of our experiment comparing our algorithm with the brute-force algorithm. }
        \label{fig:exprTable}
            \vspace*{-20pt}  
        \end{figure*}

Figure~\ref{fig:exprTable} shows results of our experiment.
For each graph size ---100, 200, 300, and 500--- we generated 100 random graphs.
The number of edges for each graph was approximately five times the number of vertices, and for each graph, approximately 20 percent of the states were final states.
The number of soft constraints, the size of the Boolean vectors assigned to a state by $\mathbf{w}$, for each graph was 10. 
We report for each graph size, the average time to compute an accepting lasso with minimum cost by our algorithm, and also the average size of the generated lassos.
For each test, the greedy algorithm had 20 minutes to find a shortest lasso.
Figure~\ref{fig:exprTable} shows also for each graph size, the number of times the brute-force algorithm was able to compute a shortest lasso within the 20 minutes time window.
Notice that for graph size 300, the brute-force algorithm failed four times to compute a minimal lasso within that time window, and for graph size 500, it was able to compute a shortest lasso only for 62 tests.
Accordingly, we considered in computing all those averages shown for the brute-force algorithm in that figure, only those tests for which the algorithm was able to compute a solution in 20 minutes.
This means that the actual time averages for graph size 300 and 500 are higher, and probably much higher, than those shown in that figure.
The average number of constraints satisfied were ranged from 6.30 to 6.81.
f%
    \begin{figure}[t]
  \centering
  \includegraphics[width=\linewidth]{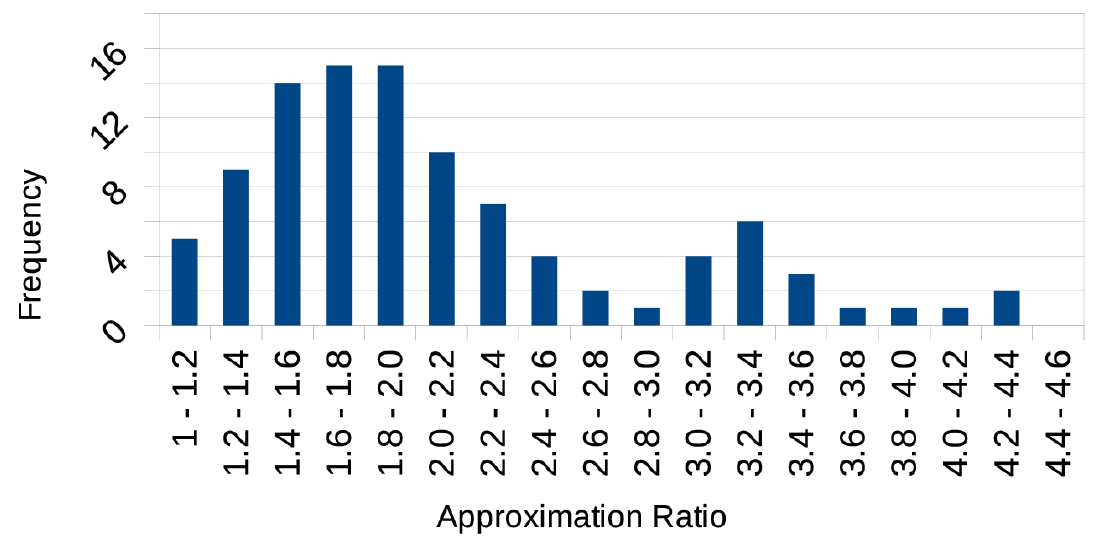}
  
  \caption{
    The distribution of approximation ratios of the lasso sizes generated by our greedy algorithm for 100 generated random graphs with 100 states.
  }
   \label{fig:dist} 
\end{figure}


For each test, we also computed approximation ratio, defined as the size of the lasso generated by the greedy algorithm over the size of the lasso generated by the brute-force algorithm.
The minimum, maximum, and the average of those ratios for each graph size is shown in Figure~\ref{fig:exprTable}.
We observe from this experiment that the greedy algorithm generates a solution significantly faster than the brute-force algorithm while the quality of solution is still acceptable. 
For graph size 100, the distribution of the approximation ratios of the 100 tests we performed is shown in Figure~\ref{fig:dist}.

We also executed a variant of our algorithm, in which we let any final state within an optimal SCC to be the midpoint and then chose a shortest one among all lassos generated for those midpoints. This algorithm increases the running time by the magnitude of the number of final states. We observed that the quality of solution is slightly improved. For the graph sizes of 100, the average of approximation ratio was 2.01 for this new variant, compared to 2.08 to original algorithm.
Because product automata are generally quite large, we may not need sacrifice computation time in favor of slightly improved quality.


\vspace*{-1.5mm}
\section{Conclusions and future work}
In this paper, we considered temporal logic planning given both hard and soft specifications of the goal. 
%
%
Future work can consider learning soft constraints. 
It can also consider the case where the environment is dynamic. In this case, the changes are reflected in the product automaton, for which one needs to maintain the SCCs of the automaton in a data structure that is able to quickly adapt to the changes.

\label{sec:conc}

\vspace*{-1.5mm}
\bibliography{References}
  
\end{document}